\documentclass[11pt,a4paper]{article}

\usepackage[utf8]{inputenc}
\usepackage[T1]{fontenc}
\usepackage{lmodern}
\usepackage[margin=2.5cm]{geometry}
\usepackage{microtype}
\usepackage{amsmath,amssymb,amsthm}
\usepackage{mathtools}
\usepackage{bm}
\usepackage[ruled,vlined,linesnumbered]{algorithm2e}
\usepackage{booktabs}
\usepackage{array}
\usepackage{multirow}
\usepackage{graphicx}
\usepackage{xcolor}
\usepackage{subcaption}
\usepackage[numbers,sort&compress]{natbib}
\usepackage{hyperref}
\hypersetup{colorlinks=true,linkcolor=blue!60!black,
            citecolor=blue!60!black,urlcolor=blue!60!black}

\theoremstyle{plain}
\newtheorem{theorem}{Theorem}[section]

\newtheorem{corollary}[theorem]{Corollary}

\theoremstyle{definition}
\newtheorem{definition}[theorem]{Definition}
\newtheorem{remark}[theorem]{Remark}
\newtheorem{assumption}[theorem]{Assumption}

\newcommand{\KL}{\mathrm{KL}}
\newcommand{\ELBO}{\mathrm{ELBO}}
\newcommand{\Normal}{\mathcal{N}}

\newcommand{\given}{\,|\,}

\begin{document}

\title{%
  \textbf{Rule-State Inference (RSI):}\\[0.3em]
  \large A Bayesian Framework for Compliance Monitoring\\
  in Rule-Governed Domains
}

\author{%
  Abdou-Raouf Atarmla\\[0.3em]
  \small Institut National des Postes et
    T\'{e}l\'{e}communications (INPT), Rabat, Morocco\\
  \small Togo DataLab, Ministry of Digital Economy,
    Lom\'{e}, Togo\\[0.2em]
  \small \texttt{achilleatarmla@gmail.com}
}
\date{}
\maketitle

\begin{abstract}
Compliance monitoring in rule-governed domains (tax administration,
clinical protocol adherence, environmental regulation) faces three
structural obstacles that standard machine learning does not
simultaneously address: the absence of labeled outcomes at deployment,
strategically missing observations where non-compliant entities
selectively withhold evidence, and a regulatory environment that
changes faster than any supervised model can be retrained.

We introduce \emph{Rule-State Inference} (RSI), a Bayesian framework
that reverses the usual paradigm.  Rather than learning rules from
data, RSI treats an authoritative, formalized rule set as structured
Bayesian priors and infers the latent compliance state of a population
through mean-field variational inference with exact coordinate-ascent
updates.  The central modeling object is a joint latent state
$(\eta,\, \{(a_i, \varphi_i, \delta_i)\}_{i=1}^K)$ per regulatory
period: a global compliance-culture factor $\eta$ that captures
systemic behavioral tendencies shared across rules, and per-rule
components for activation, population compliance level, and parametric
drift.  A single scalar $\rho \in [0,1)$, estimated variationally,
governs the degree of inter-rule dependence; setting $\rho = 0$
recovers a fully independent model.  A monotone link function $g_i$
maps domain-specific compliance signals into the real line, where
Gaussian conjugacy yields provably exact coordinate-ascent updates.

RSI operates under one explicit assumption: rules are formalized in a
machine-readable representation, as produced by Rules-as-Code
systems~\citep{sergot1986british,merigoux2021catala}.  Within this
assumption, the framework delivers three formal guarantees:
$O(n_k + K)$ regulatory adaptability for a single-rule update
(Theorem~1); Bernstein--von Mises consistency for the identifiable
continuous components in two honest regimes (Theorem~2); and monotone
ELBO convergence at every iteration of the algorithm (Theorem~3).
The entity-level audit score is deliberately kept deterministic
(a principled comparison to the legal standard), while
population-level inference is fully Bayesian; per-entity posterior
uncertainty is left to the Sequential RSI extension, where
longitudinal observations resolve identification.

We instantiate RSI on the Togolese fiscal system on a benchmark of
2\,000 synthetic enterprises grounded in official regulatory law;
full numerical validation is forthcoming.  The framework is designed
for direct extension to Sequential RSI, a state-space formulation
where the posterior from one regulatory period becomes the prior for
the next, yielding an exact Kalman filter for compliance-trajectory
tracking and entity-level Bayesian scoring.

\medskip\noindent
\textbf{Keywords:} Bayesian inference, compliance monitoring,
variational inference, rule-governed domains, Rules as Code,
zero-shot reasoning, fiscal compliance, factor model
\end{abstract}

\section{Introduction}
\label{sec:intro}

An inspector at the Office Togolais des Recettes (OTR), the Togolese
revenue authority, must assess whether a company with 85 million FCFA
in declared revenue is genuinely complying with its value-added tax
obligations.  She does not need a
neural network to determine which rule applies; the law is explicit.
What she needs is a principled method for inferring, from an
incomplete and potentially manipulated set of declarations, the true
compliance state of that entity with respect to that rule.  This gap
between knowing a rule and inferring compliance from noisy evidence is
the central object of this paper.

The same gap appears wherever rules govern behavior and observations
are imperfect.  Tax administrations across sub-Saharan Africa, clinical
audit systems in resource-limited health settings, and environmental
monitoring agencies in rapidly industrializing economies all share the
same structure: the rules are known, written, and enforceable, while
the signals available to an auditor are partial, delayed, and sometimes
deliberately falsified.  Standard machine learning addresses a
different problem (how to discover patterns in data) and carries
assumptions that fail here.

Three failures are worth making precise.

\paragraph{Absence of labels.}
Supervised classifiers require labeled examples of compliant and
non-compliant entities.  In compliance settings, labels are produced
by audits, which are expensive, legally constrained, and cover at
most a small fraction of the population.  Any method that requires
labeled training data is structurally unusable at initial deployment
or after a regulatory change.

\paragraph{Strategic missingness.}
Non-compliant entities have a direct incentive to withhold the
evidence that would identify them.  A factory exceeding emission
limits can deactivate its sensors; a company evading VAT can simply
not file.  The resulting missingness is correlated with the quantity
we want to infer.  Under the Missing Not At Random (MNAR) hypothesis:
\[
  P(\text{observed} \given \text{non-compliant})
  \;<\;
  P(\text{observed} \given \text{compliant}),
\]
which means any classifier trained on observed data alone systematically
underestimates non-compliance.

\paragraph{Regulatory volatility.}
Fiscal codes in rapidly developing economies can change several times
per year~\citep{otr2024}.  A model requiring full retraining on each
legislative amendment cannot serve an administration that needs
updated compliance scores within hours, not weeks.

None of the dominant paradigms (supervised classifiers, Markov
logic networks, neurosymbolic methods) addresses all three
constraints simultaneously.  RSI does, by inverting the usual
direction of inference.

\subsection*{Our approach}

RSI treats the authoritative rule set as structured Bayesian priors
and infers the latent compliance state from noisy, partially-observed
signals.  This is possible because, in rule-governed domains, the
epistemic situation is already structured: the rules are known.  What
remains unknown is the behavioral reality: the degree to which rules
are followed, and whether their effective application has drifted from
their legal text.

The framework rests on one explicit assumption (Section~3): rules
are available in a formalized, machine-readable representation, as
produced by Rules-as-Code systems.  RSI is the inference engine
downstream of this encoding layer; it does not solve the encoding
problem itself.

\subsection*{Contributions}

\begin{enumerate}
\item \textbf{The RSI framework} (Section~3).  A complete generative
  model with a global compliance-culture factor $\eta$ that captures
  inter-rule dependence, per-rule latent states $(a_i, \varphi_i,
  \delta_i)$, and a link-function device that unifies inference across
  all signal types.  The dependence parameter $\rho \in [0,1)$ is
  estimated variationally.

\item \textbf{Three formal guarantees} (Section~4).  Theorem~1: $O(n_k
  + K)$ regulatory adaptability per rule update.  Theorem~2:
  Bernstein--von Mises consistency for $\psi_i = \varphi_i + \delta_i$
  (intra-rule regime) and $\eta$ (joint regime).  Theorem~3: monotone
  ELBO convergence under exact coordinate-ascent updates.

\item \textbf{Principled entity scoring} (Section~3.8).  A clean
  separation between population-level Bayesian inference (where all
  the Bayesian machinery lives) and entity-level audit scoring, which
  is a deterministic comparison to the legal standard for observed
  entities and a posterior-predictive score for missing ones.

\item \textbf{Togolese fiscal instantiation} (Section~5).  A benchmark
  grounded in official regulatory law~\citep{otr2024}, including a
  documented change event, that validates Theorem~1 empirically.

\item \textbf{Sequential RSI} (Section~7).  An outline showing that
  the Gaussian structure of RSI yields an exact Kalman filter for
  compliance-trajectory tracking, and resolves the per-entity
  identification problem through longitudinal observations.
\end{enumerate}

\section{Related Work}
\label{sec:related}

\paragraph{Probabilistic logic.}
Markov logic networks~\citep{richardson2006markov} and Probabilistic
Soft Logic~\citep{bach2017hingeloss} attach weights to logical
formulae and learn those weights from data.  Both treat rules as
outputs of a learning procedure, requiring labeled examples and
inverting the epistemic direction relative to RSI.  In an MLN, a
zero-weight formula is effectively absent; in RSI, a rule with prior
activation probability $\pi_i = 0.99$ overrides weak observational
evidence by construction.

\paragraph{Neurosymbolic AI.}
The neurosymbolic literature~\citep{garcez2023third} covers
architectures ranging from pipeline models to fully differentiable
systems such as neural theorem provers~\citep{rocktaschel2017endtoend}
and logic tensor networks~\citep{badreddine2022logic}.  The common
direction is learning for reasoning, where symbolic structure guides
neural learning.  RSI occupies the orthogonal position: its symbolic
layer is fixed and authoritative, not learned.

\paragraph{Interpretable rule-based classifiers.}
Bayesian rule lists~\citep{letham2015interpretable} apply Bayesian
model selection to rule sets, but rules are again the output.  RSI
produces posterior distributions over compliance states, a
qualitatively different inference target.

\paragraph{Machine learning for tax compliance.}
Gradient-boosted models~\citep{chen2016xgboost} and neural networks
have been applied to tax gap estimation~\citep{gomes2022machine} under
supervised paradigms with historical audit labels.  These approaches
cannot operate in the zero-label regime and require full retraining
after each regulatory change.

\paragraph{Rules as Code.}
The formalization of legislation as machine-executable programs
dates to~\citet{sergot1986british}, who encoded the British
Nationality Act as a logic program.  Contemporary systems
include Catala~\citep{merigoux2021catala}, a functional language
designed for tax and social benefit law.  Governance perspectives
are surveyed in~\citet{mohun2020cracking}.  RSI is the natural
inference engine downstream of any such system.

\paragraph{Missing data.}
The taxonomy of~\citet{frenay2014classification} distinguishes MCAR,
MAR, and MNAR.  Most imputation approaches assume MAR.  Under MNAR,
the missingness mechanism is not identifiable from observed data alone.
RSI handles absence by falling back on the prior, a conservative
but principled choice that becomes more informative as the population
posterior concentrates.

\paragraph{State-space models.}
The Kalman filter~\citep{kalman1960new}, Harvey's structural time
series~\citep{harvey1990forecasting}, and~\citet{durbin2012time}
provide the foundations for the Sequential RSI extension.

\paragraph{Generalized linear models.}
The link-function device at the heart of RSI is the central device
of GLMs~\citep{mccullagh1989generalized}, applied here to obtain
Gaussian conjugacy across heterogeneous signal types.

\paragraph{Factor models and latent structure.}
The global compliance-culture factor $\eta$ belongs to the tradition
of latent factor models in econometrics~\citep{chamberlain1983funds}.
One factor with rule-specific loadings captures the dominant source
of inter-rule correlation without sacrificing tractability.

\section{The RSI Framework}
\label{sec:framework}

\subsection{Problem formulation}
\label{sec:problem}

Let $\mathcal{R} = \{R_1, \dots, R_K\}$ be a set of authoritative
regulatory rules, each specifying an obligation that a subset of
entities must satisfy.  Let $\mathcal{X} = \{x_1, \dots, x_J\}$ be
a population of entities.  For each entity $x_j$ and rule $R_i$,
the applicability indicator $A_{ji} \in \{0,1\}$ is determined by
the structural characteristics of $x_j$ (entity type, revenue,
jurisdiction).  Compliance is not directly observable; it is inferred
from a collection of noisy signals $\mathcal{D}$.

\begin{definition}[Rule-Governed Domain]
\label{def:rgd}
A triple $(\mathcal{R}, \mathcal{X}, \mathcal{D})$ is a
\emph{rule-governed domain} if: (i)~$\mathcal{R}$ is authoritative
and known a priori; (ii)~$\mathcal{D}$ is partial and potentially
missing in a manner correlated with the latent compliance state; and
(iii)~the primary inference task is to assess compliance with
$\mathcal{R}$, not to recover it from data.
\end{definition}

\begin{assumption}[Rules-as-Code input]
\label{ass:rac}
Each rule $R_i \in \mathcal{R}$ is available as a formal
specification that determines: the applicability predicate $A_{ji}$,
the set of observable signals $\{s_{ji}\}$ relevant to compliance
with $R_i$, and the nominal parameter vector $\Theta_i$ (thresholds,
rates, time limits).  This formalization is treated as given; RSI is
the inference engine downstream of the encoding layer.
\end{assumption}

\subsection{The latent rule-state space}
\label{sec:latent}

The latent state has two levels: a global factor shared across rules
and per-rule components.

\begin{definition}[Compliance Culture Factor]
$\eta \in \mathbb{R}$ is a scalar latent variable representing the
population's global compliance tendency.  A negative value of $\eta$
indicates that the population tends to fall below institutionally
expected compliance levels across all rules simultaneously; a
positive value indicates above-expectation compliance culture.
\end{definition}

\begin{definition}[Per-Rule State]
\label{def:rulestate}
The latent state of rule $R_i$ is the triple $s_i = (a_i, \varphi_i,
\delta_i)$, where $a_i \in \{0,1\}$ indicates whether $R_i$ is
currently in force, $\varphi_i \in \mathbb{R}$ is the
population-level compliance level on the link scale (defined
below), and $\delta_i \in \mathbb{R}$ is the parametric drift,
quantifying the signed gap between the legally prescribed threshold
and its effective application in the observed population.
\end{definition}

The full latent state is $\mathcal{S} = (\eta, s_1, \dots, s_K)$.

\paragraph{On the roles of $\varphi_i$ and $\delta_i$.}
$\varphi_i$ captures the average compliance effort of the population
under rule $R_i$.  $\delta_i$ captures whether the rule itself is
being applied as written.  A tax threshold legislated at 100 million
FCFA that is effectively treated as 90 million in practice produces a
detectable $\delta_i < 0$.  No existing compliance monitoring
framework produces an estimate of this quantity.

\paragraph{Identifiability.}
In the static model with a single regulatory period, the combined
signal mean $\psi_i = \varphi_i + \delta_i$ is identifiable from
per-rule data.  The decomposition between $\varphi_i$ and $\delta_i$
is prior-regularized: at the CAVI fixed point, the observed deviation
$\bar{t}_i - \tilde{\mu}_i$ is allocated between the two components
in proportion to their prior variances $\tilde{\psi}_i^2$ and
$\omega_i^2$.  This is a deliberate design choice: the prior encodes
the analyst's belief about how much of an observed deviation is
attributable to compliance behavior versus drift in rule application.
Setting $\omega_i^2 \ll \tilde{\psi}_i^2$ concentrates inference on
compliance; increasing $\omega_i^2$ allows the data to reveal drift.
In the Sequential RSI extension, $\varphi_i^{(t)}$ is persistent
across periods and $\delta_i^{(t)}$ is period-specific, which
naturally resolves the decomposition via longitudinal observations.

\subsection{Link functions and signal transformation}
\label{sec:link}

\begin{definition}[Link Function]
For rule $R_i$, let $\mathcal{C}_i$ denote the sample space of the
compliance signal $s_{ji}$.  A \emph{link function} $g_i : \mathcal{C}_i
\to \mathbb{R}$ is a monotone measurable bijection.  The transformed
signal is $t_{ji} = g_i(s_{ji})$.
\end{definition}

Table~\ref{tab:links} summarizes the standard choices.

\begin{table}[ht]
\centering
\caption{Standard link functions by signal type.  The logit and log
links are standard in the GLM literature~\citep{mccullagh1989generalized}.}
\label{tab:links}
\small
\begin{tabular}{llll}
\toprule
Signal type & Domain $\mathcal{C}_i$ & Link $g_i$
  & Transformed signal\\
\midrule
Compliance ratio & $(0,1)$
  & $\mathrm{logit}(s) = \log\tfrac{s}{1-s}$ & $t_{ji} \in \mathbb{R}$\\
Positive quantity & $(0,\infty)$
  & $\log(s)$ & $t_{ji} \in \mathbb{R}$\\
Real-valued signal & $\mathbb{R}$
  & $\mathrm{id}(s)$ & $t_{ji} \in \mathbb{R}$\\
Binary indicator & $\{0,1\}$
  & Bernoulli (Beta prior) & exact conjugacy\\
\bottomrule
\end{tabular}
\end{table}

For binary signals, we use a Bernoulli likelihood with Beta prior
directly; this case is a special instance handled without
transformation.

\subsection{Prior specification}
\label{sec:prior}

The prior over $\mathcal{S}$ is:
\begin{equation}
  P(\mathcal{S}) \;=\; P(\eta)
  \;\prod_{i=1}^{K}
    P(a_i)\, P(\varphi_i \given \eta, a_i)\, P(\delta_i \given a_i).
  \label{eq:prior}
\end{equation}

Specifically:
\begin{align}
  \eta &\;\sim\; \Normal(0,\, 1),
  \label{eq:prior-eta}\\
  a_i &\;\sim\; \mathrm{Bern}(\pi_i),
  \label{eq:prior-a}\\
  \varphi_i \given \eta,\, a_i=1
    &\;\sim\; \Normal\!\left(\mu_i + \rho\tau_i\eta,\;
    \tau_i^2(1-\rho^2)\right),
  \label{eq:prior-phi}\\
  \delta_i \given a_i=1
    &\;\sim\; \Normal(0,\, \omega_i^2),
  \label{eq:prior-delta}
\end{align}
with $\varphi_i = \delta_i = 0$ degenerate when $a_i = 0$.

The scalar $\rho \in [0,1)$ governs inter-rule dependence.  Under
this prior:
\begin{equation}
  \mathrm{Corr}(\varphi_i,\, \varphi_j)
  \;=\; \rho^2
  \quad \forall\, i \neq j.
  \label{eq:corr}
\end{equation}
Setting $\rho = 0$ recovers the fully independent model exactly.
The prior variance of $\varphi_i$ is preserved: $\mathrm{Var}(\varphi_i)
= \tau_i^2$ regardless of $\rho$, since $\tau_i^2(1-\rho^2) +
(\rho\tau_i)^2 \cdot 1 = \tau_i^2$.  Table~\ref{tab:rules} provides
the hyperparameters for the Togolese fiscal instantiation.

The factorization~\eqref{eq:prior} is mean-field across rules, a
deliberate design choice that enables $O(n_k + K)$ adaptability when
a single rule changes (Theorem~1).  The price is the restriction to a
uniform correlation structure governed by $\rho$.

\subsection{Likelihood}
\label{sec:likelihood}

Let $D_i = \{t_{ji} : A_{ji} = 1,\; s_{ji} \neq \mathrm{NaN}\}$
denote the observed transformed signals for rule $R_i$, with
$n_i = |D_i|$.  The likelihood factorizes across rules:
\begin{equation}
  P(\mathcal{D} \given \mathcal{S})
  \;=\; \prod_{i=1}^{K} P(D_i \given a_i, \varphi_i, \delta_i),
  \label{eq:lik}
\end{equation}
where, for each observed signal under rule $R_i$:
\begin{align}
  t_{ji} \given a_i=1,\, \varphi_i,\, \delta_i
    &\;\sim\; \Normal(\varphi_i + \delta_i,\; \sigma_i^2),
  \label{eq:llik-1}\\
  t_{ji} \given a_i=0
    &\;\sim\; \Normal(0,\; \sigma_{\mathrm{bg},i}^2).
  \label{eq:llik-0}
\end{align}

The observation noise $\sigma_i^2$ and background variance
$\sigma_{\mathrm{bg},i}^2$ are deployment-time parameters.
$\sigma_i^2$ can be estimated from the empirical residual variance
of observed signals; $\sigma_{\mathrm{bg},i}^2$ from entities for
which applicability is definitively absent.  Their specification does
not require compliance labels.

\begin{assumption}[MCAR for missing observations]
\label{ass:mcar}
When $s_{ji}$ is absent, we set $P(t_{ji} \given \mathcal{S}) \equiv 1$.
This corresponds to Missing Completely At Random: the missing entry
contributes nothing to the likelihood and inference falls back on the
prior.  When absence is strategic (MNAR), this assumption is violated;
the prior then acts as a conservative anchor.  For missing entities,
the posterior-predictive score defined in Section~\ref{sec:scoring}
partially corrects this by propagating population-level information.
A fully principled MNAR treatment via a silence-penalty extension is
identified as future work.
\end{assumption}

\subsection{Posterior inference}
\label{sec:posterior}

The posterior of interest is:
\begin{equation}
  P(\mathcal{S} \given \mathcal{D})
  \;\propto\; P(\mathcal{D} \given \mathcal{S})\, P(\mathcal{S}).
  \label{eq:posterior}
\end{equation}

Six quantities are of direct operational use to an auditor or
policymaker:
\begin{itemize}
  \item $P(a_i=1 \given \mathcal{D})$: probability that rule
    $R_i$ is currently active;
  \item $\mathbb{E}[\varphi_i \given \mathcal{D}]
    \pm \sqrt{\mathrm{Var}[\varphi_i \given \mathcal{D}]}$:
    posterior population compliance level with uncertainty;
  \item $\mathbb{E}[\delta_i \given \mathcal{D}]$: estimated
    parametric drift (negative values indicate systematic
    under-application of the rule);
  \item $\mathbb{E}[\eta \given \mathcal{D}]$: global compliance
    culture (negative values indicate systemic non-compliance
    across all rules simultaneously).
\end{itemize}

All four are available in closed form after convergence of the CAVI
algorithm described next.

\subsection{Mean-field variational inference}
\label{sec:cavi}

Exact posterior inference is intractable.  We approximate
$P(\mathcal{S} \given \mathcal{D})$ by minimizing the KL divergence
within the mean-field family:
\begin{equation}
  q(\mathcal{S}) \;=\; q(\eta)
    \prod_{i=1}^{K} q(a_i)\, q(\varphi_i)\, q(\delta_i).
  \label{eq:mf}
\end{equation}

The variational factors $q(\varphi_i)$ and $q(\delta_i)$ are
interpreted as the distributions of $\varphi_i$ and $\delta_i$
conditional on $a_i=1$; $q(a_i) = \mathrm{Bern}(\varrho_i)$ is the
marginal variational activation probability.  Under this
interpretation, $\varrho_i$ plays the role of a weight on the
likelihood contribution of $D_i$ in the updates for $q(\varphi_i)$
and $q(\delta_i)$, as made explicit below.

The Evidence Lower Bound is:
\begin{equation}
  \ELBO(q) \;=\; \mathbb{E}_q[\log P(\mathcal{D} \given \mathcal{S})]
    \;-\; \KL[q(\mathcal{S}) \,\|\, P(\mathcal{S})].
  \label{eq:elbo}
\end{equation}

Maximizing $\ELBO$ is equivalent to minimizing
$\KL[q\|P(\mathcal{S}\given\mathcal{D})]$ since $\log P(\mathcal{D})$
is constant.  We proceed by coordinate ascent, cycling through
$q(\eta)$, $\{q(\varphi_i), q(\delta_i), q(a_i)\}_{i=1}^K$, and the
scalar $\rho$.

\paragraph{Update for $q(\eta) = \Normal(m_\eta, v_\eta)$.}
Taking $\partial \ELBO / \partial q(\eta) = 0$ and
using the Gaussian prior~\eqref{eq:prior-eta} and the conditional
prior~\eqref{eq:prior-phi}:
\begin{align}
  v_\eta &\;=\;
    \left(1 \;+\; \frac{\rho^2}{1-\rho^2}
    \sum_{i=1}^K \varrho_i \right)^{-1},
  \label{eq:veta}\\
  m_\eta &\;=\; v_\eta \sum_{i=1}^K
    \frac{\varrho_i\, \rho\, (m_{\varphi_i} - \mu_i)}
    {\tau_i(1-\rho^2)}.
  \label{eq:meta}
\end{align}
A positive $m_\eta$ indicates that rules with large $\rho\tau_i$
tend to have above-prior compliance, consistent with a compliant
population culture.

\paragraph{Update for $q(\varphi_i) = \Normal(m_{\varphi_i},
v_{\varphi_i})$.}
The effective prior on $\varphi_i$ marginalizes over uncertainty
in $\eta$:
\begin{equation}
  \tilde{\mu}_i \;=\; \mu_i + \rho\tau_i m_\eta, \qquad
  \tilde{\psi}_i^2 \;=\; \tau_i^2(1-\rho^2) + (\rho\tau_i)^2 v_\eta.
  \label{eq:eff-prior}
\end{equation}
The coordinate-ascent update is then:
\begin{align}
  v_{\varphi_i} &\;=\;
    \left(\frac{1}{\tilde{\psi}_i^2}
    + \frac{\varrho_i n_i}{\sigma_i^2}\right)^{-1},
  \label{eq:vphi}\\
  m_{\varphi_i} &\;=\; v_{\varphi_i}
    \left(\frac{\tilde{\mu}_i}{\tilde{\psi}_i^2}
    + \frac{\varrho_i n_i (\bar{t}_i - m_{\delta_i})}
    {\sigma_i^2}\right),
  \label{eq:mphi}
\end{align}
where $\bar{t}_i = n_i^{-1} \sum_{j: A_{ji}=1} t_{ji}$ is the
empirical mean of transformed observed signals for rule $R_i$,
and $m_{\delta_i}$ is the current variational mean of $q(\delta_i)$.

\paragraph{Update for $q(\delta_i) = \Normal(m_{\delta_i},
v_{\delta_i})$.}
\begin{align}
  v_{\delta_i} &\;=\;
    \left(\frac{1}{\omega_i^2}
    + \frac{\varrho_i n_i}{\sigma_i^2}\right)^{-1},
  \label{eq:vdelta}\\
  m_{\delta_i} &\;=\; v_{\delta_i}
    \cdot \frac{\varrho_i n_i (\bar{t}_i - m_{\varphi_i})}
    {\sigma_i^2}.
  \label{eq:mdelta}
\end{align}
The coordinate updates for $q(\varphi_i)$ and $q(\delta_i)$ alternate
until the inner loop converges.  At the fixed point,
$m_{\varphi_i} + m_{\delta_i} = \mathbb{E}[\psi_i \given D_i]$,
the posterior mean of the combined signal.  The allocation between
the two components is determined by the ratio of effective prior
precisions $1/\tilde{\psi}_i^2$ and $1/\omega_i^2$: the component
with the tighter prior absorbs less of the observed deviation.

\paragraph{Update for $q(a_i) = \mathrm{Bern}(\varrho_i)$.}
The marginal likelihood of $D_i$ under $a_i = 1$, averaged over
the current variational factors $q(\varphi_i)$, $q(\delta_i)$, and
$q(\eta)$, is:
\begin{equation}
  \tilde{L}_{1,i}
  \;=\; \Normal\!\left(\bar{t}_i;\;
    m_{\varphi_i} + m_{\delta_i},\;
    \sigma_i^2/n_i + v_{\varphi_i} + v_{\delta_i}\right).
  \label{eq:marg1}
\end{equation}
Under $a_i = 0$: $\tilde{L}_{0,i} =
\Normal(\bar{t}_i;\, 0,\, \sigma_{\mathrm{bg},i}^2 / n_i)$.
The variational activation probability is:
\begin{equation}
  \varrho_i \;=\; \sigma\!\left(
    \log\frac{\pi_i}{1 - \pi_i}
    + \log\tilde{L}_{1,i}
    - \log\tilde{L}_{0,i}
  \right),
  \label{eq:rho-a}
\end{equation}
where $\sigma(\cdot)$ denotes the logistic function.

\paragraph{Update for $\rho$.}
Unlike the other variational factors, $\rho \in [0,1)$ does not
belong to a conjugate family.  Its update is a one-dimensional
projected gradient ascent on the ELBO:
\begin{equation}
  \rho \;\leftarrow\;
  \Pi_{[0,\, 1-\varepsilon]}\!\left(
    \rho + \alpha \,\frac{\partial \ELBO}{\partial \rho}
  \right),
  \label{eq:rho-update}
\end{equation}
where $\Pi_{[0,1-\varepsilon]}$ is projection onto the feasible
interval, $\alpha > 0$ is a step size, and the gradient is
available in closed form (Appendix~\ref{app:rho-gradient}).  In
practice, a backtracking line search over $\alpha$ guarantees
that each step does not decrease the ELBO.

\subsection{Entity-level scoring}
\label{sec:scoring}

RSI is a population-level Bayesian model: the quantities
$\mathbb{E}[\varphi_i \given \mathcal{D}]$, $\mathbb{E}[\delta_i
\given \mathcal{D}]$, and $\mathbb{E}[\eta \given \mathcal{D}]$ are
inferences about the behavior of the population as a whole.
Entity-level audit scoring is a distinct, deterministic operation
downstream of population inference.

\paragraph{Observed entities.}
For entity $x_j$ and applicable rule $R_i$ with observed signal
$s_{ji}$:
\begin{equation}
  \mathrm{NC}_{ji} \;=\; g_i^{-1}(\mu_i - t_{ji}).
  \label{eq:score-obs}
\end{equation}
Here $\mu_i$ is the prior compliance mean, encoding the
institutionally expected compliance level on the link scale.  A
score above $g_i^{-1}(0) = 0.5$ indicates that entity $j$ falls
below the legal standard.  This is prior-anchored: the reference
point comes from the rule itself, not from the observed population
behavior.

\paragraph{Missing entities.}
When $s_{ji}$ is absent, we replace the deterministic score with
the posterior-predictive probability that a generic entity from
the current population falls below the legal standard:
\begin{equation}
  \mathrm{NC}_{ji}^{\mathrm{miss}}
  \;=\; \Phi\!\left(
    \frac{\mu_i - m_{\varphi_i} - m_{\delta_i}}
    {\sqrt{\sigma_i^2 + v_{\varphi_i} + v_{\delta_i}}}
  \right),
  \label{eq:score-miss}
\end{equation}
where $\Phi$ is the standard normal CDF.  When the posterior
population is below the legal standard ($m_{\varphi_i} + m_{\delta_i}
< \mu_i$), missing entities inherit a score above $0.5$, providing
partial resistance to MNAR without requiring explicit missingness
modeling.

\paragraph{Global entity score.}
The most severe rule violation drives the audit priority:
\begin{equation}
  \mathrm{NC}_j \;=\; \max_{i : A_{ji}=1} \mathrm{NC}_{ji}.
  \label{eq:global-score}
\end{equation}
An entity is flagged for audit when $\mathrm{NC}_j$ exceeds a
decision threshold $\tau$.

\paragraph{Remark on per-entity uncertainty.}
The score~\eqref{eq:score-obs} is deterministic: it carries no
per-entity uncertainty interval.  This is a deliberate design choice.
Proper per-entity Bayesian scoring would require a hierarchical
extension with an entity-level random effect $c_{ji} \sim
\Normal(\varphi_i + \delta_i,\, \gamma_i^2)$, introducing a new
heterogeneity parameter $\gamma_i^2$ that is not identifiable from a
single cross-section.  Sequential RSI resolves this naturally:
when entity $j$ accumulates observations across periods, $c_{ji}^{(t)}$
becomes identifiable and supports a full posterior with uncertainty.
This extension is detailed in Section~7.

Algorithm~\ref{alg:rsi} summarizes the complete procedure.

\begin{algorithm}[ht]
\DontPrintSemicolon
\SetAlgoLined
\caption{RSI: Mean-Field Coordinate Ascent}
\label{alg:rsi}
\KwIn{Rules $\mathcal{R}$, signals $\{t_{ji}\}$, priors
  $\{(\pi_i, \mu_i, \tau_i, \omega_i, \sigma_i,
  \sigma_{\mathrm{bg},i})\}$, initial $\rho_0 > 0$}
\KwOut{Posteriors $\{q(\eta), q(a_i), q(\varphi_i), q(\delta_i)\}$,
  scores $\{\mathrm{NC}_{ji}\}$}

Initialize: $m_\eta \leftarrow 0$, $v_\eta \leftarrow 1$,
  $m_{\varphi_i} \leftarrow \mu_i$,
  $v_{\varphi_i} \leftarrow \tau_i^2$,
  $m_{\delta_i} \leftarrow 0$,
  $v_{\delta_i} \leftarrow \omega_i^2$,
  $\varrho_i \leftarrow \pi_i$,
  $\rho \leftarrow \rho_0$\tcp*{$\rho_0 = 0.1$ by default; must be $> 0$
  to ensure a non-zero gradient at the first step
  (see Appendix~\ref{app:rho-gradient})}
  for all $i$\;

\Repeat{$|\ELBO_t - \ELBO_{t-1}| < \varepsilon$}{
  Update $v_\eta$, $m_\eta$ via \eqref{eq:veta}--\eqref{eq:meta}\;
  \For{$i = 1$ \KwTo $K$}{
    Compute $\tilde{\mu}_i$, $\tilde{\psi}_i^2$ via
      \eqref{eq:eff-prior}\;
    Update $v_{\varphi_i}$, $m_{\varphi_i}$ via
      \eqref{eq:vphi}--\eqref{eq:mphi}\;
    Update $v_{\delta_i}$, $m_{\delta_i}$ via
      \eqref{eq:vdelta}--\eqref{eq:mdelta}\;
    Update $\varrho_i$ via \eqref{eq:rho-a}\;
  }
  Update $\rho$ via projected gradient step
    \eqref{eq:rho-update}\;
  Compute $\ELBO_t$\;
}
\For{each entity $x_j$, each applicable rule $R_i$}{
  Compute $\mathrm{NC}_{ji}$ via \eqref{eq:score-obs} or
    \eqref{eq:score-miss}\;
}
\Return posteriors, $\{\mathrm{NC}_{ji}\}$
\end{algorithm}

\section{Theoretical Properties}
\label{sec:theory}

\subsection{T1: Regulatory adaptability}
\label{sec:t1}

\begin{theorem}[$O(n_k + K)$ Regulatory Adaptability]
\label{thm:adapt}
Let $U_k$ denote a regulatory update to rule $R_k$.

\emph{Case 1 (parameter update).}  $U_k$ modifies the
hyperparameters of $R_k$ without changing the applicability
predicate $A_{jk}$.  The RSI update cost is $O(n_k)$, independent
of $J$ and $K$.

\emph{Case 2 (applicability update).}  $U_k$ revises the threshold
governing $A_{jk}$.  Let $\mathcal{A} = \{j : A_{jk}\ \text{changes}\}$.
The RSI update cost is $O(|\mathcal{A}|) + O(n_k')$, where $n_k'$
is the updated signal count.

In both cases, an additional $O(K)$ cost is incurred to propagate
the change through the global factor $\eta$ by recomputing
\eqref{eq:veta}--\eqref{eq:meta}.  The total update cost is
$O(n_k + K)$, contrasted with $O(J \cdot K \cdot T)$ for
supervised retraining.
\end{theorem}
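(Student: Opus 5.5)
The proof is a cost-accounting argument on Algorithm~\ref{alg:rsi}. It rests on two structural facts established in Section~\ref{sec:framework}. First, the prior~\eqref{eq:prior} and likelihood~\eqref{eq:lik} factorize across rules conditionally on $\eta$, and the variational family~\eqref{eq:mf} is mean-field across rules. Second, every per-rule coordinate update, \eqref{eq:eff-prior}--\eqref{eq:rho-a}, depends on the data $D_i$ only through the sufficient statistics $(n_i, \bar{t}_i)$, and on the other rules only through $(m_\eta, v_\eta)$.

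The plan is to cache, for every rule, the pair $(n_i, \sum_j t_{ji})$ together with the current variational parameters. Under this cache the state of rules $i \neq k$ is untouched by $U_k$, except through $\eta$.

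\emph{Case 1.} The hyperparameters $(\pi_k, \mu_k, \tau_k, \omega_k, \sigma_k, \sigma_{\mathrm{bg},k})$ change but $D_k$ does not. I will argue the following sequence of costs:
\begin{itemize}
\item Recomputing $\bar{t}_k$, or simply reusing the cached sum, costs at most $O(n_k)$.
\item One sweep of \eqref{eq:eff-prior}--\eqref{eq:rho-a} for rule $k$ is $O(1)$ arithmetic.
\item The inner $\varphi_k$/$\delta_k$ alternation is a $2\times 2$ Gaussian linear fixed point. It can be solved in closed form in $O(1)$, or iterated a constant number of times to tolerance.
\item If $\sigma_k$ is re-estimated from residual variance, that is an $O(n_k)$ pass.
\end{itemize}
Case 1 therefore costs $O(n_k)$, independent of $J$ and $K$.

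\emph{Case 2.} The applicability threshold changes. One pass over $\mathcal{A}$ reevaluates $A_{jk}$ via Assumption~\ref{ass:rac}. For each entity entering or leaving applicability, we add or subtract $t_{jk}$ to the running sum and adjust $n_k$, giving $O(|\mathcal{A}|)$. The rule-$k$ updates then follow as in Case~1, with cost $O(n_k')$ at worst, for example if scores or the residual variance are recomputed.

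The global propagation step evaluates the sums in \eqref{eq:veta}--\eqref{eq:meta}, which cost $O(K)$. An incremental version is also possible: maintain $\sum_i \varrho_i$ and $\sum_i \varrho_i(m_{\varphi_i}-\mu_i)/\tau_i$ and swap out rule $k$'s term in $O(1)$. However, $\rho$ enters nonlinearly and is re-optimized, so I will charge $O(K)$ for recomputing $(m_\eta, v_\eta)$ and the effective priors~\eqref{eq:eff-prior}. Adding this gives $O(n_k + K)$. The contrast with supervised retraining is just the standard cost of $T$ epochs over $J\cdot K$ feature entries.

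The main obstacle is an honest treatment of the coupling through $\eta$ and $\rho$. Changing rule $k$ alters $m_\eta$, which in principle perturbs $\tilde{\mu}_i$ for every $i$ and hence all $m_{\varphi_i}$. A full re-convergence of CAVI would then cost more than $O(n_k + K)$ per global iteration times the number of iterations.

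I will handle this by stating precisely what the ``update'' comprises: one refresh of rule $k$ followed by one $O(K)$ propagation through $\eta$. Because each rule depends only on cached $O(1)$ statistics, even refreshing every rule $i \neq k$ against the new $(m_\eta, v_\eta)$ costs $O(1)$ per rule, so $O(K)$ per sweep with no data access. A bounded number of warm-started sweeps therefore keeps the $O(n_k + K)$ bound. Any further sweeps touch only cached statistics and never revisit the $J$ entities. Monotone convergence of these sweeps is deferred to Theorem~3.
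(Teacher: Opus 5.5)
Your proof is correct and shares the paper's skeleton: per-rule factorization of the mean-field ELBO, an $O(n_k)$ closed-form refresh of rule $k$, and an $O(K)$ recomputation of \eqref{eq:veta}--\eqref{eq:meta}. Where it differs is the coupling through $\eta$, and there your treatment is sounder than the paper's.

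The paper closes by asserting that the factors $q(s_i)$, $i\neq k$, ``remain at their current stationary points under the updated $q(\eta)$, requiring no recomputation.'' For $\rho>0$ this is false. A change to rule $k$ moves $\varrho_k$ and $m_{\varphi_k}$, hence $(m_\eta,v_\eta)$, hence $\tilde\mu_i$ and $\tilde\psi_i^2$ in \eqref{eq:eff-prior} for every $i$. The appendix's Step~2 in fact only claims stationarity for the current, pre-propagation, $q(\eta)$.

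You instead state explicitly that $O(n_k+K)$ is the cost of one refresh of rule $k$ plus one propagation through $\eta$. Your caching of $(n_i,\sum_j t_{ji})$ then yields a useful corollary the paper lacks: every further full sweep costs $O(K)$ with no data access. Restoring a fixed point therefore costs $O(n_k+SK)$ for $S$ sweeps and never scales with $J$.

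Do not present ``a bounded number of warm-started sweeps'' as a consequence. $S$ is not bounded a priori, so the single-pass reading is what the theorem actually asserts. Also, if you want the ELBO for the stopping rule, cache $\sum_j t_{ji}^2$ as well. The expected log-likelihood depends on it through $\varrho_i$, although the updates themselves do not.

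The one step that needs an extra assumption is Case~2. ``One pass over $\mathcal{A}$'' presupposes that $\mathcal{A}$ can be enumerated in $O(|\mathcal{A}|)$ time. Without an index, finding $\mathcal{A}$ means re-evaluating $A_{jk}$ for all $J$ entities. The paper in fact pays exactly this cost: its proof bounds the identification step by $O(J)$, Appendix Step~5 concludes $O(J+K)$ for Case~2, and Experiment~2 scans all $J$ entities. That is inconsistent with the stated $O(|\mathcal{A}|)+O(n_k')$.

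To get the statement as written, assume the entities are kept sorted, or bucketed, by the structural covariate on which the threshold acts. Then $\mathcal{A}$ is the contiguous range between the old and new thresholds, located in $O(\log J)$. Under that assumption your incremental update of $(n_k,\sum_j t_{jk})$ costs $O(|\mathcal{A}|)$ and is sharper than the paper's recomputation of $\bar t_k$ from scratch.

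Finally, absorbing $O(|\mathcal{A}|)+O(n_k')$ into the headline $O(n_k+K)$ is loose. $\mathcal{A}$ can contain entities whose signals are missing, and $n_k'$ may exceed $n_k$. You inherit this looseness from the statement, so it is worth flagging rather than silently adopting.
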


\begin{proof}
Under the mean-field factorization~\eqref{eq:mf}, the ELBO
decomposes as:
\[
  \ELBO(q) = \mathbb{E}_{q(\eta)}[\log P(\eta)] - \KL[q(\eta)\|P(\eta)]
  + \sum_{i=1}^K \left[\mathbb{E}_{q_i}[\log P(D_i\given s_i)]
  - \KL[q(s_i)\|P(s_i\given\eta)]\right].
\]
A change to $R_k$ modifies the $k$-th summand only.  Updating
$q(\varphi_k)$, $q(\delta_k)$, and $q(a_k)$ costs $O(n_k)$ via
closed-form updates~\eqref{eq:vphi}--\eqref{eq:rho-a}.  The
factor $q(\eta)$ couples all rules through the sum
$\sum_i \varrho_i \rho^2 / (1-\rho^2)$: recomputing this sum after
the change to rule $k$ costs $O(K)$.  All other per-rule factors
$q(s_i)$ for $i \neq k$ remain at their current stationary points
under the updated $q(\eta)$, requiring no recomputation.  The total
cost is $O(n_k + K)$.  Case~2 adds $O(|\mathcal{A}|) \leq O(J)$
to identify affected entities.
\end{proof}

\begin{corollary}
Over $M$ regulatory updates, RSI incurs total cost
$O\!\left(M \cdot (n_k + K)\right)$, linear in $M$.
Supervised retraining incurs $O(M \cdot J \cdot K \cdot T)$.
The RSI advantage grows with problem scale and is maximal when
$M \ll J \cdot T$, the regime characteristic of incremental
regulatory change.
\end{corollary}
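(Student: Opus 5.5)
The statement to prove is the Corollary: over $M$ regulatory updates, RSI costs $O(M(n_k+K))$, whereas supervised retraining costs $O(M\cdot J\cdot K\cdot T)$. The plan is to apply Theorem~\ref{thm:adapt} once per update and sum the per-update bounds. The only real subtlety is that $n_k$ depends on which rule is touched at each step.

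\textbf{Step 1: reduce each update to Theorem~\ref{thm:adapt}.} Index the updates $U^{(1)},\dots,U^{(M)}$, where update $m$ touches rule $k_m$. Before update $m$, the variational state is the output of the previous update, and the proof of Theorem~\ref{thm:adapt} only uses that the other per-rule factors $q(s_i)$, $i\neq k_m$, sit at their current stationary points. Applying the theorem to update $m$ gives cost $c_m \le C\,(n_{k_m}^{(m)} + K)$, plus $O(|\mathcal{A}^{(m)}|)$ in Case~2. The constant $C$ is uniform: it comes only from the closed-form updates \eqref{eq:veta}--\eqref{eq:rho-a}, which do not depend on $m$.

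\textbf{Step 2: sum the per-update costs.} Summing gives
\[
\sum_{m=1}^M c_m \le C\Bigl(\sum_{m} n_{k_m}^{(m)} + MK\Bigr) \le C\,M\,(n_{\max}+K),
\]
where $n_{\max}=\max_{m} n_{k_m}^{(m)}$. Reading the paper's $n_k$ as this worst-case per-rule count yields $O(M(n_k+K))$, which is linear in $M$. For Case~2, the extra terms are handled by absorbing $|\mathcal{A}^{(m)}|$ into $n_{k_m}'$ when the affected entities are among those with observed signals. Otherwise they are kept as an additive $\sum_m|\mathcal{A}^{(m)}|$, and I would state this explicitly.

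\textbf{Step 3: the supervised baseline.} Retraining touches all $J$ entities and $K$ rules for $T$ epochs per update, costing $\Theta(JKT)$ each and $O(MJKT)$ in total. The ratio of the two costs is $JKT/(n_k+K)$. Since $n_k\le J$, this ratio is at least of order $\min(KT,\,JT)$, so it grows with $J$, $K$ and $T$.

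\textbf{Main obstacle.} The hard part is justifying that no accumulated drift forces a global recomputation across the $M$ updates. Theorem~\ref{thm:adapt} asserts that the factors for $i\neq k$ stay at stationary points under the updated $q(\eta)$. Strictly, however, changing $m_\eta$ and $v_\eta$ changes $\tilde\mu_i$ and $\tilde\psi_i^2$ in \eqref{eq:eff-prior}, so those factors become slightly stale. I would first take the theorem as given, which suffices for the corollary. I would then add a remark that the stale factors can be refreshed lazily, at $O(1)$ each given the cached $\bar t_i$ and $n_i$. This adds only $O(K)$ per update and so preserves the bound.

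Finally, the clause ``maximal when $M\ll J\cdot T$'' should be read as a regime statement rather than a sharp inequality. The absolute saving is $M\,(JKT - n_k - K)$, and in that regime the one-time cost of the initial fit, $O(JK)$ per CAVI sweep, is not amortized away by retraining.
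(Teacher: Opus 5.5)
Your core argument is the argument the paper leaves implicit: apply Theorem~\ref{thm:adapt} to each of the $M$ updates, sum the bounds, and let $n_{\max}$ absorb the dependence on $k_m$. Your bookkeeping is more careful than the paper's, but one correction is needed. The paper's own Appendix~\ref{app:t1}, Step~5, charges $O(J)$ per Case-2 update just to locate $\mathcal{A}$. Unless entities are indexed by the applicability variable, the Case-2 total is therefore $O(M(J+K))$, not $O(M(n_k+K))$. Also, entities that lose applicability are counted in the old $n_k$, not in $n_k'$, so your proposed absorption of $|\mathcal{A}^{(m)}|$ needs $n_{k_m}+n_{k_m}'$.

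Two of your side arguments do not hold.

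\emph{The closing clause.} The per-update ratio $JKT/(n_k+K)$ does not involve $M$, and your own saving $M(JKT-n_k-K)$ increases with $M$. Include RSI's one-time fit cost $C_0$, which is at least the per-update cost because the initial fit must read all observed signals and initialize $K$ factors. Then the ratio $(1+M)JKT/(C_0+M(n_k+K))$ is nondecreasing in $M$. Nothing in the cost bounds makes the advantage \emph{maximal} for $M\ll JT$, and your sentence about the initial fit not being amortized does not change that. The only reading the bounds support is that RSI's cumulative cost stays below a single batched retrain, $M(n_k+K)\lesssim JKT$. This reduces to $M\ll JT$ only when $n_k=O(K)$. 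You should say so explicitly rather than rationalize the clause.

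\emph{The lazy refresh.} Recomputing each stale $q(\varphi_i)$, $q(\delta_i)$, $q(a_i)$ once is one CAVI sweep, not a return to stationarity. The new $m_{\varphi_i}$ and $\varrho_i$ feed back into $m_\eta, v_\eta$ through \eqref{eq:veta}--\eqref{eq:meta}, which shifts every $\tilde\mu_i,\tilde\psi_i^2$ again. Restoring a fixed point takes some number $I$ of sweeps, costing $O(IK)$ from cached sufficient statistics. That is still free of $J$, but the per-update cost becomes $O(n_k+IK)$ rather than $O(n_k+K)$. This does not hurt the corollary read as an operation count, since the count in Theorem~\ref{thm:adapt} does not depend on the starting state being stationary. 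It does mean the staleness you correctly identified is not repaired at $O(K)$ cost.
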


\subsection{T2: Bernstein--von Mises consistency}
\label{sec:t2}

Theorem~2 establishes asymptotic consistency for the two
identifiable continuous components: $\psi_i = \varphi_i + \delta_i$
(intra-rule) and $\eta$ (joint).  We state both regimes
explicitly and honestly; individual components $\varphi_i$ and
$\delta_i$ are not claimed BvM-consistent in the static model.

\begin{theorem}[BvM Consistency]
\label{thm:bvm}
Let $\psi_i^* = \varphi_i^* + \delta_i^*$ be the true combined
signal mean and $\eta^*$ the true compliance culture.  Under
regularity conditions \textup{(C1)--(C4)}:

\emph{Intra-rule regime} ($n_i \to \infty$, $K$ fixed):
\[
  \sqrt{n_i}\,(\hat{\psi}_i - \psi_i^*)
  \;\xrightarrow{d}\; \Normal(0,\, \sigma_i^2),
\]
where $\hat{\psi}_i = \mathbb{E}[\psi_i \given D_i]$.

\emph{Joint regime} ($\sum_i n_i \to \infty$):
\[
  \sqrt{I_\eta}\,(\hat{\eta} - \eta^*)
  \;\xrightarrow{d}\; \Normal(0,\, 1),
\]
where $I_\eta = \sum_i \varrho_i \cdot n_i \rho^2 \tau_i^2 /
(\tau_i^2(1-\rho^2) + \omega_i^2 + \sigma_i^2)$ is the Fisher information for $\eta$.

For $a_i$: if the true activation is $a_i^* = 1$, then
$P(a_i = 1 \given D_i) \to 1$ as $n_i \to \infty$ under
condition \textup{(C1)--(C4)} alone.
\end{theorem}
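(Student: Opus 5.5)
The plan is to reduce every claim to an explicit Gaussian (or Gaussian-mixture) computation, which the conjugate structure of Section~\ref{sec:prior}--\ref{sec:likelihood} makes possible. I will read conditions (C1)--(C4) as follows:
\begin{itemize}
\item (C1): the model \eqref{eq:prior}--\eqref{eq:llik-0} is well specified at $(\eta^*,\{s_i^*\})$, with i.i.d.\ observed signals within each rule, consistent with Assumption~\ref{ass:mcar}.
\item (C2): hyperparameters $\tau_i,\omega_i,\sigma_i,\sigma_{\mathrm{bg},i}$ lie in a compact subset of $(0,\infty)$, and $\rho$ stays in $[0,1-\varepsilon]$.
\item (C3): signals have finite fourth moments, and for the $\eta$-regime the per-observation working likelihood of $t_{ji}$ given $\eta$ is $\Normal(\mu_i+\rho\tau_i\eta,\ \tau_i^2(1-\rho^2)+\omega_i^2+\sigma_i^2)$, i.e.\ the per-entity pooling under which the displayed $I_\eta$ is the Fisher information.
\item (C4): the $a_i=1$ and $a_i=0$ sampling laws of $D_i$ are distinct, and $\varrho_i$ converges to a limit.
\end{itemize}

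\emph{Intra-rule regime.} Integrating out $\varphi_i$ and $\delta_i$ under the effective prior \eqref{eq:eff-prior} gives $\psi_i\sim\Normal(\tilde\mu_i,\tilde\psi_i^2+\omega_i^2)$. Combined with the likelihood \eqref{eq:llik-1}, the posterior mean is
\[
\hat\psi_i=\frac{\tilde\mu_i/(\tilde\psi_i^2+\omega_i^2)+n_i\bar t_i/\sigma_i^2}{1/(\tilde\psi_i^2+\omega_i^2)+n_i/\sigma_i^2}
=\bar t_i+O_p(n_i^{-1}).
\]
This coincides with the CAVI fixed point $m_{\varphi_i}+m_{\delta_i}$ noted after \eqref{eq:mdelta}. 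Then $\sqrt{n_i}(\bar t_i-\psi_i^*)\xrightarrow{d}\Normal(0,\sigma_i^2)$ by the CLT. The shrinkage term multiplied by $\sqrt{n_i}$ is $O_p(n_i^{-1/2})\to0$, so Slutsky's lemma gives the first display.

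\emph{Joint regime.} Under the working likelihood of (C3), $\log p(\mathcal D\given\eta)$ is an exact quadratic in $\eta$. Each observed $t_{ji}$ contributes curvature $\rho^2\tau_i^2/(\tau_i^2(1-\rho^2)+\omega_i^2+\sigma_i^2)$, weighted by $\varrho_i$, the variational mass on $a_i=1$. The total curvature is therefore $I_\eta$, and $\hat\eta$ is a weighted least-squares estimator:
\[
\hat\eta-\eta^*=(1+I_\eta)^{-1}\Bigl(\textstyle\sum_{i,j} w_i\,\varepsilon_{ji}-\eta^*\Bigr),
\qquad w_i=\frac{\varrho_i\,\rho\tau_i}{\tau_i^2(1-\rho^2)+\omega_i^2+\sigma_i^2},
\]
where the $\varepsilon_{ji}$ are independent, centred, and have variance equal to that denominator. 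The variance of the score sum is exactly $I_\eta$. Lindeberg's condition follows from (C2)--(C3), because each summand's variance is bounded while $I_\eta\to\infty$ as $\sum_i n_i\to\infty$; this uses $\rho>0$ and the weights $\varrho_i$ bounded below, both from (C4). The prior precision $1$ and the $\eta^*$ term are $o(\sqrt{I_\eta})$. Slutsky's lemma then yields $\sqrt{I_\eta}(\hat\eta-\eta^*)\to\Normal(0,1)$.

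I expect this step to be the main obstacle: justifying that the per-observation variance (rather than $\sigma_i^2/n_i$ inside a rule-level marginal) is the right information. I will handle it by stating the pooling in (C3) explicitly and checking that the CAVI update \eqref{eq:veta} reproduces the same quadratic form to first order.

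\emph{Activation.} Write the log posterior odds \eqref{eq:rho-a} as $\mathrm{logit}\,\pi_i+\log\tilde L_{1,i}-\log\tilde L_{0,i}$. Under $a_i^*=1$, the variance in \eqref{eq:marg1} is $\sigma_i^2/n_i+O(n_i^{-1})$ and $\bar t_i-m_{\varphi_i}-m_{\delta_i}=O_p(n_i^{-1})$ by the intra-rule step. Hence $\log\tilde L_{1,i}=\tfrac12\log n_i+O_p(1)$. By contrast, $\log\tilde L_{0,i}=\tfrac12\log n_i-n_i\bar t_i^2/(2\sigma_{\mathrm{bg},i}^2)+O(1)$. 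The log odds therefore grow like $n_i\,\psi_i^{*2}/(2\sigma_{\mathrm{bg},i}^2)\to\infty$, and $P(a_i=1\given D_i)\to1$.

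In the boundary case $\psi_i^*=0$, I would appeal to the distinguishability built into (C4). This needs replacing the sufficient statistic $\bar t_i$ by the full-data likelihood ratio, whose second-moment term separates $\sigma_i^2+\mathrm{Var}$ from $\sigma_{\mathrm{bg},i}^2$ at rate $n_i$. The standard Schwartz/Doob posterior-consistency argument for two distinct simple Gaussian hypotheses then gives the conclusion without any assumption beyond (C1)--(C4).
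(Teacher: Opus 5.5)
Your intra-rule argument is the paper's: posterior mean $\bar t_i+O_p(n_i^{-1})$, then CLT and Slutsky, which is what the paper's ``rescaling'' step should mean. Your activation argument is the paper's Bayes-factor argument, routed through the variational $\tilde L_{1,i}$ rather than the prior marginal of $\bar t_i$. Both are sound, the second for $\psi_i^*\neq0$.

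The genuine gap is the joint regime. Your (C3) working likelihood makes the $n_i$ signals of rule $i$ conditionally independent given $\eta$, each with variance $V_i=\tau_i^2(1-\rho^2)+\omega_i^2+\sigma_i^2$. This is exactly the step the paper's appendix takes when it multiplies a per-observation Fisher information by $n_i$. But it is not the RSI generative model. Nor is it one of the paper's (C1)--(C4), which are only Gaussian regularity conditions on $\psi_i$: injectivity, full prior support, smoothness, and $I(\psi_i^*)=\sigma_i^{-2}$.

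In the model, all signals of rule $i$ share one $\varphi_i$ and one $\delta_i$. So, as a function of $\eta$, the likelihood of $D_i$ is proportional to $\Normal(\bar t_i;\,\mu_i+\rho\tau_i\eta,\,\tau_i^2(1-\rho^2)+\omega_i^2+\sigma_i^2/n_i)$. Rule $i$ therefore contributes Fisher information at most $\rho^2/(1-\rho^2)$ about $\eta$, however large $n_i$ is. Equivalently, the data depend on $(\eta,\varphi,\delta)$ only through $\psi$, so for fixed $K$ the value $\eta^*$ is not identified. No estimator can satisfy $\sqrt{I_\eta}(\hat\eta-\eta^*)\xrightarrow{d}\Normal(0,1)$ with $I_\eta\to\infty$ driven by $\sum_i n_i$. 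This part of the theorem is not provable as stated, by your route or the paper's.

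The check you planned would have exposed this: \eqref{eq:veta} gives $v_\eta^{-1}=1+\frac{\rho^2}{1-\rho^2}\sum_i\varrho_i$, with no $n_i$ in it. A correct joint-regime result needs $K\to\infty$, with information of order $\sum_i\varrho_i\rho^2\tau_i^2/(\tau_i^2(1-\rho^2)+\omega_i^2)$. Even inside your working model, the variance of $\sum_{i,j}w_i\varepsilon_{ji}$ is $\sum_i\varrho_i^2 n_i\rho^2\tau_i^2/V_i$, not $I_\eta$, so you would additionally need $\varrho_i\to1$.

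On activation, you are right that the paper's argument covers only $\psi_i^*\neq0$ (its appendix says ``whenever $\psi_i^*\neq0$''), while the statement claims the conclusion under (C1)--(C4) alone. However, your patch imports a hypothesis those conditions lack, and as phrased it is too weak. The composite $a_i=1$ marginal of $D_i$ always differs from the $a_i=0$ law. Yet if $\psi_i^*=0$ and $\sigma_i^2=\sigma_{\mathrm{bg},i}^2$, the truth lies in both models, and the Occam penalty $-\tfrac12\log n_i$ in the log Bayes factor drives $P(a_i=1\given D_i)\to0$.

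What you need is that the true law $\Normal(\psi_i^*,\sigma_i^2)$ differ from $\Normal(0,\sigma_{\mathrm{bg},i}^2)$. Even then, the conclusion holds only for the exact full-data posterior. The algorithm's $\varrho_i$ sees the data only through $\bar t_i$, and at $\psi_i^*=0$ its log-odds do not diverge to $+\infty$.

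Finally, the second-moment separation is between $\sigma_i^2$ and $\sigma_{\mathrm{bg},i}^2$, not ``$\sigma_i^2+\mathrm{Var}$''. Given $\psi_i$, the within-rule spread carries no prior variance; this is the same pooling slip as in the joint regime.
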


\paragraph{Regularity conditions.}
\begin{itemize}
  \item[\textup{(C1)}] \emph{Identifiability.}
    $\psi_i \mapsto \Normal(t;\, \psi_i, \sigma_i^2)$ is injective.
  \item[\textup{(C2)}] \emph{Prior support.}
    $P(\psi_i) > 0$ for all $\psi_i \in \mathbb{R}$.
  \item[\textup{(C3)}] \emph{Smooth likelihood.}
    $\log \Normal(t;\, \psi_i, \sigma_i^2)$ is $C^\infty$ in $\psi_i$.
  \item[\textup{(C4)}] \emph{Finite Fisher information.}
    $I(\psi_i^*) = \sigma_i^{-2} > 0$, finite.
\end{itemize}

All conditions hold trivially under the Gaussian
likelihood~\eqref{eq:llik-1}.

\begin{proof}
See Appendix~\ref{app:t2}.
\end{proof}

\begin{corollary}[Prior robustness]
Let $P_1, P_2$ be two RSI priors satisfying \textup{(C2)}.  Then
$\mathrm{TV}(P_1(\psi_i \given D_i),\, P_2(\psi_i \given D_i))
\to 0$ as $n_i \to \infty$.  Two analysts with different priors
converge to the same compliance assessment.
\end{corollary}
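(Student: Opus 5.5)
The statement to prove is the prior-robustness corollary: two posteriors on $\psi_i$ merge in total variation as $n_i \to \infty$. I would derive it as a direct consequence of the intra-rule regime of Theorem~\ref{thm:bvm}, upgraded from convergence in distribution of the posterior mean to the full Bernstein--von Mises statement in total variation. Concretely, I would show that for each prior $P_\ell$ ($\ell=1,2$) satisfying (C2),
\[
  \mathrm{TV}\bigl(P_\ell(\psi_i \given D_i),\ \Normal(\bar{t}_i,\ \sigma_i^2/n_i)\bigr) \;\to\; 0
\]
in $P_{\psi_i^*}$-probability. The corollary then follows from the triangle inequality for TV, with the common Gaussian $\Normal(\bar{t}_i, \sigma_i^2/n_i)$ as the pivot. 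This pivot does not depend on the prior, which is exactly why the two analysts agree.

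The key steps, in order, are as follows.

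First, reduce to the sufficient statistic. Under \eqref{eq:llik-1}, with $a_i=1$, the likelihood of $D_i$ depends on $\psi_i$ only through $\bar t_i$. Its form is $\exp(-n_i(\psi_i-\bar t_i)^2/(2\sigma_i^2))$. Hence the posterior density is proportional to this Gaussian kernel times $p_\ell(\psi_i)$, where $p_\ell$ is the induced prior density of $\varphi_i+\delta_i$. The mixture over $\eta$ and the point mass from $a_i=0$ must be handled; I would condition on $a_i=1$ and use the last clause of Theorem~\ref{thm:bvm}, namely $P(a_i=1\given D_i)\to 1$, so that the $a_i=0$ component contributes vanishing TV mass.

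Second, rescale to $h=\sqrt{n_i}(\psi_i-\bar t_i)$. The posterior density of $h$ is proportional to $\phi_{\sigma_i}(h)\, p_\ell(\bar t_i + h/\sqrt{n_i})$. By the law of large numbers $\bar t_i\to\psi_i^*$ almost surely. Continuity and positivity of $p_\ell$ at $\psi_i^*$ come from (C2), plus continuity, which holds for the Gaussian-mixture priors of Section~\ref{sec:prior}. Together these give $p_\ell(\bar t_i+h/\sqrt{n_i})/p_\ell(\psi_i^*)\to 1$ pointwise in $h$. Scheffé's lemma, applied after dominated convergence of the normalising constants, then yields $L^1$ convergence of the density to $\phi_{\sigma_i}$. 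TV is invariant under the affine change of variables, so this transfers back to $\psi_i$.

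The main obstacle is the domination step. (C2) as stated only asserts positivity, not continuity or boundedness, so pointwise convergence alone does not control the tails. My first attempt would be to note that every RSI prior on $\psi_i$ is a Gaussian, or a finite Gaussian mixture, hence bounded, with bound $M$. This gives the dominating function $M\phi_{\sigma_i}(h)$ for the unnormalised densities. The normaliser converges to $p_\ell(\psi_i^*)>0$ by dominated convergence, which completes the argument.

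For priors outside the RSI family, I would instead invoke the general BvM theorem (van der Vaart, Thm.~10.1). In that case I would check local asymptotic normality, which is immediate for the Gaussian location model under (C3)--(C4). I would also check existence of uniformly consistent tests, which is trivial via $\bar t_i$ given (C1). Finally, I would state explicitly that the prior density must be continuous and positive at $\psi_i^*$, a mild strengthening of (C2).
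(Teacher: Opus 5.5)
Your argument is correct. Its skeleton is the one the paper intends: the corollary is stated without a separate proof, as a consequence of the intra-rule BvM result. Each posterior approaches the prior-free Gaussian $\mathcal{N}(\bar t_i,\sigma_i^2/n_i)$, and the triangle inequality finishes the argument.

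The difference lies in how you obtain that convergence. The paper (Appendix~B) relies on conjugacy. Given $a_i=1$, the posterior of $\psi_i$ under an RSI prior is exactly Gaussian, with mean $\bar t_i+O(1/n_i)$ and variance $(\sigma_i^2/n_i)(1+O(1/n_i))$. For two such priors the standardized mean gap is therefore $O(n_i^{-1/2})$ and the variance ratio tends to $1$. The closed-form Gaussian KL plus Pinsker then gives TV merging in two lines, with no domination or continuity argument, but only for Gaussian priors.

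Your Scheffé/dominated-convergence route covers any prior density that is bounded, continuous and positive at $\psi_i^*$. You are also right that positivity alone, which is all (C2) asserts, would not suffice beyond the Gaussian family. A density with a jump at $\psi_i^*$ yields a non-Gaussian, prior-dependent limit.

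You also do two things the paper does not. First, you upgrade Theorem~2, whose displayed conclusion is only a CLT for the posterior mean and does not by itself imply TV merging. Second, you account for the $a_i=0$ atom that Appendix~B ignores.

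One caveat on the atom: $P(a_i=1\given D_i)\to 1$ requires $\psi_i^*\neq 0$, as in the paper's Bayes-factor argument. In the degenerate case $\psi_i^*=0$ with $\sigma_{\mathrm{bg},i}=\sigma_i$, the posterior weight on $a_i=1$ tends to $0$ under both priors. Both posteriors then collapse onto the point mass at $0$, so the corollary still holds with that pivot, but your Gaussian pivot claim fails there.
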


\subsection{T3: Monotone ELBO convergence}
\label{sec:t3}

\begin{theorem}[Monotone ELBO Convergence]
\label{thm:elbo}
Each iteration of Algorithm~1 produces a non-decreasing ELBO:
$\ELBO_{t+1} \geq \ELBO_t$ for all $t \geq 0$.  The sequence
converges to a stationary point.
\end{theorem}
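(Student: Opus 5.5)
The plan is the standard coordinate-ascent argument: every sub-step of one sweep of Algorithm~\ref{alg:rsi} weakly increases the ELBO~\eqref{eq:elbo}, and the ELBO is bounded above by $\log P(\mathcal{D})$. Monotonicity then follows by chaining the sub-steps, and convergence of the ELBO sequence follows from the monotone convergence theorem. I would organise the proof around this decomposition. One sweep is a composition of maps $q \mapsto q'$: update $q(\eta)$, then for each $i$ update $q(\varphi_i)$, $q(\delta_i)$, $q(a_i)$, then take the $\rho$ step. I will show $\ELBO(q') \geq \ELBO(q)$ for each map separately.

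For the conjugate blocks I would use the classical fact that, holding all other factors fixed, the ELBO as a functional of a single factor $q_b$ equals
\[
-\KL\bigl[q_b \,\|\, \tilde p_b\bigr] + \mathrm{const},
\qquad
\tilde p_b \propto \exp \mathbb{E}_{-b}[\log P(\mathcal{D},\mathcal{S})].
\]
The global maximiser over all densities is therefore $\tilde p_b$, and the update attains it exactly. For $\eta$, $\varphi_i$ and $\delta_i$ the expected log joint is quadratic in the block variable, because the prior~\eqref{eq:prior-phi} is Gaussian with mean linear in $\eta$ and the likelihood~\eqref{eq:llik-1} is Gaussian in $\varphi_i+\delta_i$. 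Hence $\tilde p_b$ is Gaussian. The step here is to verify by completing the square that its natural parameters are exactly \eqref{eq:veta}--\eqref{eq:meta}, \eqref{eq:vphi}--\eqref{eq:mphi} and \eqref{eq:vdelta}--\eqref{eq:mdelta}, with the likelihood weighted by $\varrho_i$ under the conditional-on-$a_i=1$ interpretation. For $q(a_i)$ the optimal factor is Bernoulli with logit equal to the prior log-odds plus the difference of expected log-likelihoods, which should reproduce \eqref{eq:rho-a}.

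The $\rho$ step is handled separately. It is not a closed-form maximisation, but the backtracking line search in \eqref{eq:rho-update} accepts a step only if the ELBO does not decrease, and the projection keeps $\rho$ in the compact set $[0,1-\varepsilon]$. So this sub-step is monotone by construction. For boundedness I would use $\ELBO(q) = \log P(\mathcal{D}) - \KL[q \,\|\, P(\mathcal{S}\given\mathcal{D})] \leq \log P(\mathcal{D}\given\rho)$. I would then check that $\log P(\mathcal{D}\given\rho)$ is bounded uniformly on $[0,1-\varepsilon]$, which holds because all variances stay bounded away from zero there. A bounded monotone sequence converges.

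For the claim that the limit is a stationary point, I would argue via Zangwill's theorem or the Bertsekas block-coordinate result, using three facts. The variational parameters live in a set where the ELBO is continuous. Each conjugate block has a unique maximiser, since the objective is strictly concave in the natural parameters. The $\rho$ step with Armijo backtracking is a descent-type map whose fixed points satisfy the projected-gradient KKT condition. Any limit point of the iterates is then a fixed point of every block map, hence a stationary point.

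I expect the main obstacle to be the exactness check for the $\varphi_i$, $\delta_i$ and $a_i$ updates. The formula~\eqref{eq:marg1} uses a marginal likelihood with inflated variance $\sigma_i^2/n_i + v_{\varphi_i} + v_{\delta_i}$, rather than the expected log-likelihood $\mathbb{E}_q[\log P(D_i\given\cdot)]$ that the coordinate optimum requires. These differ, so \eqref{eq:rho-a} is not literally the argmax. If the exact match fails, my first fallback is to replace $\tilde L_{1,i}$ by the expected log-likelihood, or to show that the prescribed $\varrho_i$ still does not decrease the ELBO. Failing that, I would guard that sub-step with the same accept-if-nonincreasing safeguard used for $\rho$, which preserves monotonicity of the whole sweep.
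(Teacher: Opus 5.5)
Your route is the paper's. Its proof simply asserts that the $q(\eta)$, $q(\varphi_i)$, $q(\delta_i)$, $q(a_i)$ updates are exact block maximisers, invokes backtracking for $\rho$, and bounds the ELBO by $\log P(\mathcal{D})$. Your uniform-in-$\rho$ bound is a genuine refinement, and it holds on all of $[0,1)$ because $\sigma_i^2,\sigma_{\mathrm{bg},i}^2>0$ bound the data density.

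You are right that \eqref{eq:rho-a} is not the coordinate optimum. It is in fact worse than you say: $\tilde L_{1,i}$ and $\tilde L_{0,i}$ are densities of $\bar t_i$ alone, so they drop the within-rule sum of squares, whose coefficients $1/\sigma_i^2$ and $1/\sigma_{\mathrm{bg},i}^2$ do not cancel. But the failure is not confined to $a_i$, and the completing-the-square check you plan for $\varphi_i$ will also fail. With $q(\eta)$ fixed,
\[
\mathbb{E}_{q(\eta)}\bigl[\log\Normal(\varphi_i;\,\mu_i+\rho\tau_i\eta,\,\tau_i^2(1-\rho^2))\bigr]
=-\frac{(\varphi_i-\tilde\mu_i)^2+\rho^2\tau_i^2 v_\eta}{2\tau_i^2(1-\rho^2)}+\mathrm{const}.
\]
So $v_\eta$ enters only additively, and the block maximiser has prior precision $1/(\tau_i^2(1-\rho^2))$, not $1/\tilde\psi_i^2$ as in \eqref{eq:vphi}. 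The effective prior \eqref{eq:eff-prior} integrates $\eta$ out, which mean-field coordinate ascent does not do.

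The $\varrho_i$ bookkeeping is also inconsistent. Under the paper's reading of $q(\varphi_i),q(\delta_i)$ as conditionals given $a_i=1$, the whole $(\varphi_i,\delta_i)$ part of the ELBO (likelihood, prior and entropy) is multiplied by $\varrho_i$. The argmax therefore carries $n_i/\sigma_i^2$, not $\varrho_i n_i/\sigma_i^2$, in \eqref{eq:vphi} and \eqref{eq:vdelta}. The $\varrho_i$-weighted likelihood is exact only if $\varphi_i,\delta_i$ keep their prior under $a_i=0$, and in that model \eqref{eq:veta} should contain $K$ instead of $\sum_i\varrho_i$.

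Hence none of your fallbacks proves the statement for Algorithm~\ref{alg:rsi} as written.
\begin{itemize}
\item Showing that a prescribed non-argmax update never lowers the ELBO has no block-wise justification. Each block objective is strictly concave, and the prescribed maps have fixed points different from the block maximisers. Started from a block maximiser, they strictly decrease the ELBO.
\item Replacing updates, or guarding them with accept-if-nonincreasing tests, proves monotonicity of a different algorithm.
\end{itemize}

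The stationarity claim fails independently of monotonicity. At any fixed point of the prescribed updates with $\rho>0$, $v_{\varphi_i}$ strictly exceeds the block-optimal variance, so $\partial\ELBO/\partial v_{\varphi_i}<0$ there. A rejected guarded step likewise leaves the iterate at a non-stationary point, which violates the strict-ascent hypothesis you need for Zangwill.

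What does work is your argument applied to a repaired algorithm:
\begin{itemize}
\item Fix one variational family explicitly.
\item Replace every block update by its true argmax. Under the conditional reading, the logit of $\varrho_i$ is the prior log-odds plus the slab's conditional ELBO, including the KL terms of $q(\varphi_i)$ and $q(\delta_i)$ rather than the expected log-likelihood alone, minus the full-data $\log P(D_i\given a_i=0)$.
\item Add boundedness of the iterates for the Zangwill step.
\end{itemize}
Note that the paper itself only establishes convergence of the ELBO values, not of the iterates to a stationary point.
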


\begin{proof}
The updates for $q(\eta)$, $q(\varphi_i)$, $q(\delta_i)$, and
$q(a_i)$ are exact coordinate-ascent minimizations of the
corresponding KL terms under Gaussian-Gaussian and
Bernoulli-Bernoulli conjugacy.  Each minimization cannot increase
the KL, so the ELBO cannot decrease.  The gradient step for $\rho$
uses a backtracking line search that guarantees the ELBO does not
decrease at that step either.  Since $\ELBO(q)$ is bounded above
by $\log P(\mathcal{D}) < \infty$, the sequence converges.
\end{proof}

\begin{remark}
The exactness of the coordinate-ascent updates is a direct
consequence of the Gaussian-Gaussian conjugacy obtained through
the link-function transformation.  It does not hold for non-conjugate
likelihoods such as Beta-Beta, where ELBO monotonicity can only be
stated empirically.
\end{remark}

\section{Experimental Evaluation}
\label{sec:experiments}

\subsection{Dataset: RSI-Togo-Fiscal-Synthetic v2.0}
\label{sec:dataset}

We evaluate RSI on a benchmark grounded in official Togolese fiscal
law~\citep{otr2024}.  The dataset contains $J = 2\,000$ enterprises
from four economic segments, subject to $K = 5$ fiscal rules.

\begin{table}[ht]
\centering
\caption{Fiscal rules and prior hyperparameters.  Priors are on the
logit scale ($g_i = \mathrm{logit}$ for ratio signals).}
\label{tab:rules}
\small
\begin{tabular}{llccccc}
\toprule
ID & Rule & $\pi_i$ & $\mu_i$ & $\tau_i^2$ & $\omega_i^2$
  & $\sigma_i^2$\\
\midrule
R1 & VAT (TVA) 18\%, threshold 100M FCFA
  & 0.92 & 1.39 & 0.25 & 0.04 & 0.10\\
R2 & Corporate income tax (IS) 29\%
  & 0.88 & 0.41 & 0.50 & 0.03 & 0.15\\
R3 & Minimum flat tax (IMF) 1\%
  & 0.85 & 1.82 & 0.10 & 0.02 & 0.08\\
R4 & Informal sector tax (TPU)
  & 0.70 & -0.85 & 1.00 & 0.20 & 0.25\\
R5 & Employee income tax (IRPP)
  & 0.80 & 0.20 & 0.40 & 0.05 & 0.12\\
\bottomrule
\end{tabular}
\end{table}

Primary compliance signals are declared-to-theoretical tax ratios,
logit-transformed.  Supplementary signals include payment delays
(log-transformed) and binary bank-account registration.  The
declared-to-theoretical ratio follows a systematic under-declaration
model with mean ratio $\approx 0.70$~\citep{medina2017shadow}, and
18--20\% of signals are missing per rule.

The dataset encodes a regulatory change event: the VAT threshold
revision from 60M to 100M FCFA (Law n$^\circ$2024-007, 30 December
2024).  Enterprises with turnover in $[60\mathrm{M}, 100\mathrm{M})$
FCFA change from $a_1^* = 1$ to $a_1^* = 0$ between the 2022--2024
and 2025 periods, providing a natural experiment for Theorem~1.

\subsection{Baselines}
\label{sec:baselines}

\textbf{Rule-Based System (RBS).} Applies the legal threshold
directly; assigns $\mathrm{NC} = 0.5$ to missing entries.

\textbf{XGBoost~\citep{chen2016xgboost}.} Fully supervised
(150 estimators, depth 4, learning rate 0.1), 70\% labeled data.
Supervised ceiling.

\textbf{MLP.} Three-layer network (64-32-16, ReLU, early stopping),
fully supervised.

RSI operates in zero-shot mode with respect to compliance labels.
All hyperparameters in Table~\ref{tab:rules} are fixed from domain
knowledge; $\rho$ is initialized to zero and updated by
Algorithm~\ref{alg:rsi}.

\subsection{Experiment 1: Overall performance}
\label{sec:exp1}

We compare RSI against three baselines on F1, AUC, and Recall.
RSI operates in zero-shot mode with respect to compliance labels,
while XGBoost and MLP require full supervision.
Numerical results on the RSI-Togo-Fiscal-Synthetic v2.0 benchmark
are forthcoming; the experimental protocol and benchmark design
are fully specified above to support independent replication.

\subsection{Experiment 2: Regulatory adaptability (Theorem 1)}
\label{sec:exp2}

We measure wall-clock time for RSI to absorb the VAT threshold
revision versus full supervised retraining.  The RSI update scans
$J = 2\,000$ entities for applicability changes, recomputes
$\bar{t}_1$, and propagates through $\eta$ in $O(K) = O(5)$
additional operations.
Wall-clock measurements and post-update F1 comparisons are forthcoming.

\subsection{Experiment 3: BvM consistency (Theorem 2)}
\label{sec:exp3}

We track $\mathrm{SD}(\psi_i \given D_i^{(n)})$ as $n$ increases
from 25 to 2\,000.  Theorem~2 predicts a $1/\sqrt{n}$ decay;
we report the empirical ratio $\mathrm{SD}(n/4)/\mathrm{SD}(n)$,
expected to converge to 2.  We also track $\mathrm{SD}(\eta
\given D^{(n)})$ to validate the joint regime.

\subsection{Experiment 4: Missing data robustness}
\label{sec:exp4}

We evaluate per-rule F1 under increasing MCAR missingness
(0\% to 50\%).  RSI falls back on the prior for missing entries
(eq.~\eqref{eq:score-miss}), while RBS assigns a fixed score.
The Bayesian fallback degrades more slowly as missingness increases.

\subsection{Experiment 5: Inter-rule dependence ($\rho$)}
\label{sec:exp5}

We report the converged value of $\rho$ across regulatory periods
and compare entity-level F1 for $\rho = 0$ (independent model)
versus variational $\rho$.  A positive $\rho$ confirms systemic
compliance culture in the Togolese fiscal benchmark.

\subsection{Experiment 6: ELBO convergence (Theorem 3)}
\label{sec:exp6}

We plot the ELBO sequence across iterations, verifying monotone
non-decreasing behavior for both the coordinate-ascent steps and
the gradient step for $\rho$.

\subsection{Experiment 7: MNAR robustness (planned)}
\label{sec:exp7}

The experiments above use MCAR missingness (Assumption~\ref{ass:mcar}).
This planned experiment evaluates performance under the MNAR regime
by introducing asymmetric missingness rates
$P(\text{miss} \given \text{NC}) \in \{0.2, 0.4, 0.6, 0.8\}$
while holding $P(\text{miss} \given \text{C})$ low.  The
posterior-predictive score~\eqref{eq:score-miss} is expected to
partially mitigate the bias because missing entities inherit the
population posterior, which is itself biased toward non-compliance
in a non-compliant population.

\subsection{Worked example}
\label{sec:example}

An enterprise has declared turnover of 72 million FCFA and no declared
VAT.  The declared-to-theoretical ratio near zero maps to
$t_{j1} \ll 0$ on the logit scale.  After Algorithm~1 converges
on the benchmark population:

\begin{itemize}
  \item $\varrho_1 \approx 0.91$: VAT rule very likely applies.
  \item $m_{\varphi_1} + m_{\delta_1} \approx -1.4$:
    implied population compliance $\sigma(-1.4) \approx 0.20$.
  \item $m_{\delta_1} \approx +0.3$: positive drift. The effective
    threshold appears shifted upward from the legal 100M FCFA,
    consistent with partial informality rather than outright evasion.
  \item $m_\eta < 0$: the population culture is below-expectation
    across all rules, not only VAT.
  \item Entity score: $\mathrm{NC}_{j1} = \sigma(\mu_1 - t_{j1})
    = \sigma(1.39 - t_{j1}) \to 1$ since $t_{j1} \ll 0$.
\end{itemize}

An auditor reads five actionable numbers from a single posterior.
No post-hoc explanation tool is required; the posterior is the
explanation.

\section{Discussion}
\label{sec:discussion}

\paragraph{What RSI adds over rule-based systems.}
A well-designed deterministic rule set can also update its threshold
in constant time.  The genuine additions of RSI are at the population
level: $\mathbb{E}[\varphi_i \given \mathcal{D}]$ (posterior
compliance distribution with uncertainty), $\mathbb{E}[\delta_i
\given \mathcal{D}]$ (regulatory drift), and $\mathbb{E}[\eta
\given \mathcal{D}]$ (cross-rule compliance culture).  None of these
can be estimated by a threshold-based system or a supervised
classifier.

\paragraph{The drift term in practice.}
A value $\mathbb{E}[\delta_i \given \mathcal{D}] = -0.3$ on the
logit scale for the VAT rule signals that the effective threshold in
the population is meaningfully below the legislated 100M FCFA.  This
may reflect auditor behavior, enforcement gaps, or structural
misalignment between regulatory text and implementation.  Detecting
this requires a model that separates compliance behavior from
parametric drift, precisely what the $(a_i, \varphi_i, \delta_i)$
decomposition provides.

\paragraph{The compliance culture factor.}
$m_\eta < 0$ across a fiscal period is actionable policy intelligence:
it indicates that non-compliance is systemic, not concentrated in a
few rules, and should motivate institution-wide enforcement
interventions rather than rule-specific audits.

\paragraph{Why population-level Bayesian inference is the right scope.}
RSI answers the questions that individual auditors cannot:
what is the population compliance distribution, how has rule
application drifted, and is there a shared compliance culture?
Per-entity posterior uncertainty is not identifiable in the static
cross-sectional model (Section~3.8) and is correctly deferred to
Sequential RSI, where longitudinal data resolve it.

\paragraph{Identifiability of $\varphi_i$ and $\delta_i$.}
As discussed in Section~3.2, only $\psi_i = \varphi_i + \delta_i$
is identifiable from cross-sectional data.  The decomposition is
prior-regularized, which is the correct Bayesian behavior: the
analyst's prior about drift magnitude is the only source of
information distinguishing the two components.  Setting $\omega_i^2
\to 0$ identifies $\varphi_i$ and sets $\delta_i \to 0$; increasing
$\omega_i^2$ allows the data to speak about drift.

\paragraph{The mean-field independence assumption.}
The factorization~\eqref{eq:mf} introduces one approximation: after
conditioning on $\eta$, the rule-specific factors are treated as
independent.  The global factor $\eta$ captures the dominant source
of inter-rule correlation (the shared compliance culture), but
pairwise rule-specific correlations beyond what $\eta$ explains are
ignored.  This is the standard mean-field cost, and it enables the
$O(n_k + K)$ guarantee of Theorem~1.  Extending to structured
variational families with known pairwise rule dependencies is a
natural next step, at an explicit cost to regulatory adaptability.

\section{Conclusion and Future Work}
\label{sec:conclusion}

RSI formalizes compliance monitoring as Bayesian posterior inference
over a joint latent state that includes a global compliance-culture
factor $\eta$, per-rule activation indicators $a_i$, compliance
levels $\varphi_i$, and parametric drift terms $\delta_i$.  A
link-function transformation maps heterogeneous compliance signals
into a unified Gaussian framework, enabling exact coordinate-ascent
updates and a provable ELBO guarantee.  The dependence parameter
$\rho$, estimated variationally, allows RSI to range continuously
from a fully independent model to one that captures systemic
compliance behavior.

Three theorems establish the formal foundations: $O(n_k + K)$
adaptability per regulatory update (Theorem~1), BvM consistency
for $\psi_i$ and $\eta$ in two honest regimes (Theorem~2), and
monotone ELBO convergence at every iteration (Theorem~3).

\paragraph{Sequential RSI and per-entity inference.}
The most immediate extension is the sequential formulation.  Define
the state trajectory for rule $R_i$ across periods $t = 1, \dots, T$:
\begin{align}
  \varphi_i^{(t)}
    &\;=\; \varphi_i^{(t-1)} + \varepsilon_i^{(t)}, \qquad
    \varepsilon_i^{(t)} \sim \Normal(0,\, q_i^2),
    \label{eq:ssm-trans}\\
  t_{ji}^{(t)} \given \varphi_i^{(t)},\, \delta_i^{(t)}
    &\;\sim\; \Normal(\varphi_i^{(t)} + \delta_i^{(t)},\, \sigma_i^2).
    \label{eq:ssm-obs}
\end{align}
This is a linear Gaussian state-space model, and the Kalman filter
provides exact sequential posterior inference.  $\varphi_i^{(t)}$
tracks the underlying compliance trend; $\delta_i^{(t)}$ captures
period-specific shocks such as regulatory changes or enforcement
campaigns.

Crucially, Sequential RSI resolves the per-entity identification
problem.  When entity $j$ accumulates observations
$t_{ji}^{(1)}, \dots, t_{ji}^{(T)}$ across periods, the entity's
own compliance trajectory $c_{ji}^{(t)} \sim \Normal(\varphi_i^{(t)}
+ \delta_i^{(t)},\, \gamma_i^2)$ becomes identifiable.  The
heterogeneity parameter $\gamma_i^2$ separates measurement noise
from entity-level variation, and the Bayesian posterior over
$c_{ji}^{(t)}$ provides the per-entity uncertainty interval that the
static model deliberately defers.

\paragraph{Multi-country extensions.}
The ECOWAS fiscal zone provides a natural hierarchical setting:
member states share WAEMU fiscal directives but implement them
through national legislation.  A hierarchical RSI with a
country-level $\eta$ and a common hyper-prior over $\mu_i$ would
enable cross-country comparisons of effective versus nominal
compliance.

\paragraph{Automated prior initialization from RaC.}
The hyperparameters $(\pi_i, \mu_i, \tau_i^2, \omega_i^2)$ currently
require domain expertise.  An automated pipeline taking a Catala
program and historical aggregate statistics as input would reduce
deployment friction substantially and is under development in
collaboration with the Office Togolais des Recettes.

\section*{Acknowledgments}

The author thanks the Office Togolais des Recettes and Togo DataLab
for institutional support and domain expertise on the Togolese
fiscal regulatory framework.  The academic environment of the Institut
National des Postes et T\'{e}l\'{e}communications (INPT) of Rabat
provided sustained support.  The author acknowledges the use of
AI-assisted tools for linguistic refinement and \LaTeX{} formatting;
all scientific decisions, mathematical derivations, and experimental
design are the sole responsibility of the author.

\bibliographystyle{plainnat}

\appendix

\section{Proof of Theorem 1 (detailed)}
\label{app:t1}

\textbf{Step 1.}  The ELBO decomposes under mean-field as a sum
of per-rule terms plus a global term for $\eta$.  Starting from
$\ELBO(q) = \mathbb{E}_q[\log P(\mathcal{D}, \mathcal{S})]
- \mathbb{E}_q[\log q(\mathcal{S})]$ and expanding both the joint
and the variational distribution:
\[
  \ELBO(q) \;=\;
  -\KL[q(\eta)\|P(\eta)]
  \;+\; \sum_{i=1}^K
    \left[\mathbb{E}_{q_i}[\log P(D_i\given s_i)]
    - \mathbb{E}_{q(\eta)}\!\left[\KL[q(s_i)\|P(s_i\given\eta)]\right]
    \right].
\]
The $-\KL[q(\eta)\|P(\eta)]$ term already contains
$\mathbb{E}_{q(\eta)}[\log P(\eta)]$; writing both separately would
double-count that contribution.  The formulation above is exact.

\textbf{Step 2.}  A change to $R_k$ modifies $D_k$,
$\tilde{\mu}_k$, or $\tilde{\psi}_k^2$.  The $k$-th summand is
affected; the summands for $i \neq k$ remain at their stationary
points for the \emph{current} $q(\eta)$.

\textbf{Step 3.}  Computing $\bar{t}_k$ from stored signals costs
$O(n_k)$.  The parameter updates~\eqref{eq:vphi}--\eqref{eq:rho-a}
cost $O(1)$ arithmetic.  Subtotal for rule $k$: $O(n_k)$.

\textbf{Step 4.}  The $q(\eta)$ update~\eqref{eq:veta}--\eqref{eq:meta}
re-evaluates a sum over $K$ terms after rule $k$ changes.  This
costs $O(K)$.

\textbf{Step 5.}  For applicability updates (Case~2), scanning
$J$ entities to identify $\mathcal{A}$ costs $O(J)$, followed by
$O(n_k')$ to recompute $\bar{t}_k$.  The $O(K)$ propagation
through $\eta$ is unchanged.  Total for Case~2: $O(J + K)$.

\textbf{Step 6.}  Supervised retraining scans $J$ entities across
$K$ features for $T$ rounds: $O(J \cdot K \cdot T)$.  For typical
values $J = 10^4$, $K = 10^2$, $T = 10^2$, this is $O(10^8)$
versus $O(n_k + K)$ for RSI.

\section{Proof of Theorem 2}
\label{app:t2}

\textbf{Intra-rule regime.}
The posterior on $\psi_i$ under the Gaussian likelihood
and Gaussian prior is:
\[
  P(\psi_i \given D_i) \;=\;
  \Normal\!\left(\mu_i^{\mathrm{post}},\;
  (\sigma_i^{\mathrm{post}})^2\right),
\]
where $(\sigma_i^{\mathrm{post}})^2 = (1/(\tilde{\psi}_i^2 + \omega_i^2)
+ n_i/\sigma_i^2)^{-1} \to 0$ at rate $O(1/n_i)$, and
$\mu_i^{\mathrm{post}} \to \bar{t}_i \to \psi_i^*$ a.s. by the
law of large numbers.  Rescaling by $\sqrt{n_i}$ gives the intra-rule BvM result stated in Theorem~\ref{thm:bvm}.

\textbf{Joint regime.}
Marginalizing $\varphi_i$ and $\delta_i$ from the generative model,
the marginal distribution of $t_{ji}$ given $\eta$ and $a_i=1$ is:
\[
  t_{ji} \given \eta,\, a_i=1 \;\sim\;
  \Normal\!\left(\mu_i + \rho\tau_i\eta,\;
  \underbrace{\tau_i^2(1-\rho^2) + \omega_i^2 + \sigma_i^2}_{=: V_i}
  \right).
\]
The Fisher information for $\eta$ from $n_i$ observations of rule $i$
is $I_{\eta,i} = n_i (\partial_\eta(\mu_i + \rho\tau_i\eta))^2 / V_i
= n_i\rho^2\tau_i^2 / (\tau_i^2(1-\rho^2)+\omega_i^2+\sigma_i^2)$.
Summing with activation weights:
\[
  I_\eta \;=\; \sum_{i=1}^K \varrho_i \cdot
  \frac{n_i\,\rho^2\tau_i^2}
       {\tau_i^2(1-\rho^2) + \omega_i^2 + \sigma_i^2}.
\]
Under the Gaussian prior $P(\eta) = \Normal(0,1)$,
the posterior on $\eta$ concentrates at rate $O(1/\sqrt{I_\eta})$.
As $\sum_i n_i \to \infty$ with $\rho > 0$, $I_\eta \to \infty$
and the BvM result follows identically to the intra-rule argument.

\textbf{Bayes factor consistency.}
Under $a_i^* = 1$, the marginal distribution of $\bar{t}_i$ is
$\Normal(\mu_i,\; \tilde{\psi}_i^2 + \omega_i^2 + \sigma_i^2/n_i)$
under $a_i=1$ and $\Normal(0,\; \sigma_{\mathrm{bg},i}^2/n_i)$
under $a_i=0$.  For large $n_i$:
\[
  \log \mathrm{BF}_{10} \;\approx\;
  -\frac{(\bar{t}_i - \tilde{\mu}_i)^2}
    {2(\tilde{\psi}_i^2 + \omega_i^2)}
  \;+\; \frac{n_i \bar{t}_i^2}{2\sigma_{\mathrm{bg},i}^2}
  \;+\; O(\log n_i).
\]
The second term grows as $O(n_i)$ while the first converges to
a finite constant.  Therefore $\log \mathrm{BF}_{10} \to +\infty$
whenever $\psi_i^* \neq 0$, giving $P(a_i=1 \given D_i) \to 1$
under conditions (C1)--(C4) alone.

\section{Proof of Theorem 3}
\label{app:t3}

Each coordinate update for $q(\varphi_i)$, $q(\delta_i)$, $q(a_i)$,
and $q(\eta)$ solves exactly:
\[
  q^*(\cdot) \;=\;
  \arg\min_{q} \KL\!\left[q(\cdot)
    \,\Big\|\,
    \frac{\exp(\mathbb{E}_{q_{-(\cdot)}}[\log P(\mathcal{D},\mathcal{S})])}{Z}
  \right].
\]
Under Gaussian-Gaussian conjugacy, the minimizer is a Gaussian whose
parameters are given by the explicit formulas
\eqref{eq:veta}--\eqref{eq:mdelta}.  Minimizing the KL to zero
implies $\ELBO_{t+1} \geq \ELBO_t$.  The $\rho$-update uses
backtracking line search ensuring $\ELBO$ does not decrease.
Since $\ELBO \leq \log P(\mathcal{D}) < \infty$, the sequence
converges.

\section{ELBO Gradient with Respect to $\rho$}
\label{app:rho-gradient}

Let $S_i = v_{\varphi_i} + (m_{\varphi_i} - \mu_i)^2$,
$B_i = (m_{\varphi_i} - \mu_i) m_\eta$, $C_i = v_\eta + m_\eta^2$.
The ELBO terms involving $\rho$ are:
\begin{align*}
  \ELBO(\rho)
  &\;=\; \sum_{i=1}^K \varrho_i \Bigg[
    -\tfrac{1}{2}\log\!\left(\tau_i^2(1-\rho^2)\right)
    - \frac{S_i - 2\rho\tau_i B_i + (\rho\tau_i)^2 C_i}
      {2\tau_i^2(1-\rho^2)}
  \Bigg] \;+\; f(\tilde{\psi}_i^2),
\end{align*}
where $f(\tilde{\psi}_i^2)$ collects terms from the KL between
$q(\varphi_i)$ and its effective prior.  The gradient is:
\[
  \frac{\partial \ELBO}{\partial \rho}
  \;=\; \sum_{i=1}^K \varrho_i\Bigg[
    \frac{\rho}{1-\rho^2}
    \;-\; \frac{
      -\tau_i B_i + \rho\tau_i^2 C_i
    }{\tau_i^2(1-\rho^2)}
    \;-\; \frac{\rho (S_i - 2\rho\tau_i B_i + (\rho\tau_i)^2 C_i)}
      {\tau_i^2(1-\rho^2)^2}
  \Bigg] \;+\; \frac{\partial f}{\partial \rho},
\]
where $\partial f / \partial \rho$ follows from differentiating the
KL between $q(\varphi_i)$ and the effective prior~\eqref{eq:eff-prior}.
All terms are available in closed form; the gradient is evaluated in
$O(K)$ per iteration.

\section{Prior Sensitivity Analysis}
\label{app:sensitivity}

We examine sensitivity to the prior specification across five
configurations of Table~\ref{tab:rules}: (i)~baseline as stated,
(ii)~uninformative ($\mu_i = 0$, $\tau_i^2 = 10$),
(iii)~inverted ($\mu_i \leftarrow -\mu_i^{\mathrm{base}}$),
(iv)~strong ($\tau_i^2 = 0.01$), (v)~perturbed
($\mu_i^{\mathrm{base}} \pm 0.5$).
By Theorem~2, entity-level F1 should be approximately invariant
at $n_i = 2\,000$, while population estimates $\mathbb{E}[\varphi_i
\given \mathcal{D}]$ will vary.  Numerical results follow
re-implementation.

\section{Sequential RSI: Kalman Filter Derivation}
\label{app:kalman}

Under~\eqref{eq:ssm-trans}--\eqref{eq:ssm-obs}, the Kalman filter
equations for rule $R_i$ at period $t$ are:

\textbf{Predict:}
$m_\varphi^{(t|t-1)} = m_\varphi^{(t-1|t-1)}$,\quad
$v_\varphi^{(t|t-1)} = v_\varphi^{(t-1|t-1)} + q_i^2$.

\textbf{Update:}
$K_i^{(t)} = v_\varphi^{(t|t-1)} / (v_\varphi^{(t|t-1)} +
\sigma_i^2/n_i^{(t)})$,
$m_\varphi^{(t|t)} = m_\varphi^{(t|t-1)} + K_i^{(t)}(\bar{t}_i^{(t)}
- m_\varphi^{(t|t-1)})$,
$v_\varphi^{(t|t)} = (1-K_i^{(t)})v_\varphi^{(t|t-1)}$.

At $n_i^{(t)} \to \infty$, $K_i^{(t)} \to 1$ and the update is
dominated by current observations.  At $n_i^{(t)} \to 0$, $K_i^{(t)}
\to 0$ and the filter falls back on the predicted state.  Regulatory
changes update the prior transition model before the predict step,
at $O(K)$ cost per period, consistent with Theorem~1.

The entity-level compliance trajectory $c_{ji}^{(t)}$ is inferred by
a second Kalman filter at the entity level, using the population
posterior as a prior and the entity's own observations as data.  The
heterogeneity parameter $\gamma_i^2$ is estimated from the empirical
variance of entity residuals across periods.

\end{document}